\documentclass[10pt,letterpaper]{article}

\usepackage{times}
\usepackage{epsfig}
\usepackage{graphicx}
\usepackage{amsmath}
\usepackage{amssymb}
\usepackage{bm}
\usepackage{algorithmic}
\usepackage{algorithm}
\usepackage{amsthm}
\usepackage{placeins}

\newtheorem{theorem}{Theorem}[section]

\newtheorem{proposition}[theorem]{Proposition}

\usepackage[pagebackref=true,breaklinks=true,letterpaper=true,colorlinks,bookmarks=false]{hyperref}

\begin{document}

\title{A Kinematic Chain Space for Monocular Motion Capture}

\author{Bastian Wandt, Hanno Ackermann, Bodo Rosenhahn\\
Leibniz University Hannover\\
{\tt\small \{wandt,ackermann,rosenhahn\}@tnt.uni-hannover.de}
}

\maketitle

\begin{abstract}
This paper deals with motion capture of kinematic chains (e.g. human skeletons) from monocular image sequences taken by uncalibrated cameras. We present a method based on projecting an observation into a \textnormal{kinematic chain space (KCS)}. An optimization of the nuclear norm is proposed that implicitly enforces structural properties of the kinematic chain. Unlike other approaches our method does not require specific camera or object motion and is not relying on training data or previously determined constraints such as particular body lengths. The proposed algorithm is able to reconstruct scenes with limited camera motion and previously unseen motions. It is not only applicable to human skeletons but also to other kinematic chains for instance animals or industrial robots. We achieve state-of-the-art results on different benchmark data bases and real world scenes.

\end{abstract}

\section{Introduction}
Monocular human motion capture is an important and large part of recent research. Its applications range from surveillance, animation, robotics to medical research. While there exist a large number of commercial motion capture systems, monocular 3D reconstruction of human motion plays an important role where complex hardware arrangements are not feasible or too costly.

\begin{figure}[t]
	\begin{center}
		\includegraphics[width=0.9\linewidth]{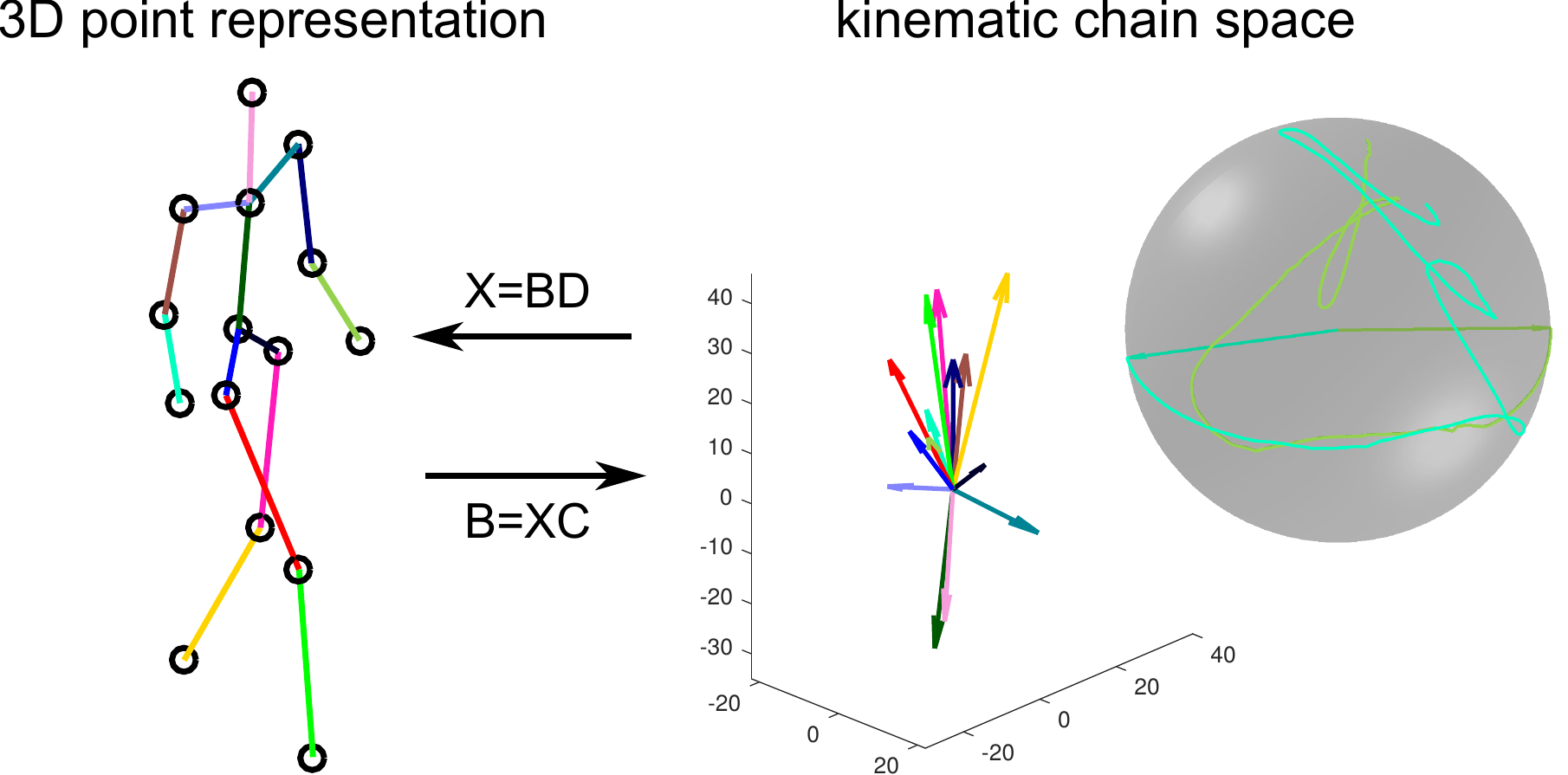}
	\end{center}
	\caption{Mapping from a 3D point representation to the kinematic chain space. The vectors in the KCS equal to directional vectors in the 3D point representation. The sphere shows the trajectories of left and right lower arm in KCS. Since both bones have the same length their trajectories lie on the same sphere.}
	\label{fig_teaser}
\end{figure}

Recent approaches to the non-rigid structure from motion problem \cite{akhter2011trajectory,gotardo12,li12,Rehan2014} achieve good results for laboratory settings. They are designed to work with tracked 2D points from arbitrary 3D point clouds. To resolve the duality of camera and point motion they require  sufficient camera motion in the observed sequence. On the other hand in many applications (e.g. human motion capture, animal tracking or robotics) properties of the tracked objects are known. Exploiting known structural properties for non-rigid structure from motion problems is rarely considered e.g. by using example based modeling as in \cite{ChenC09} or human bone lengths constancy in \cite{Wandt2016}. Recently, linear subspace training approaches have been proposed \cite{Ramakrishna12,Wang2014,Akhter2015,ZhouConvexRelax2015,Wandt2016} and appear to efficiently represent human motion, even for 3D reconstruction from single images. However, they require extensive training on known motions which restricts them to reconstructions of the same motion category. Training based approaches cannot cover individual subtleties in the motion sufficiently well.

This paper closes the gap between non-rigid structure from motion and subspace-based human modeling. Similar to other approaches which depend on the seminal work of Bregler et al. \cite{Bregler2000}, we decompose an observation matrix in three matrices corresponding to camera motion, transformation and basis shapes. Unlike other works that find a transformation which enforces properties of the camera matrices, we develop an algorithm that optimizes the transformation with respect to structural properties of the observed object. This reduces the amount of camera motion necessary for a good reconstruction. We experimentally found that even sequences without camera motion can be reconstructed. Unlike other works in the field of human modeling we propose to first project the observations in a \textit{kinematic chain space (KCS)} before optimizing a reprojection error with respect to our kinematic model. Fig.~\ref{fig_teaser} shows the mapping between the KCS and the representation based on 2D or 3D feature points. It is done by multiplication with matrices which implicitly encode a kinematic chain (cf. Sec.~\ref{sec_bone_space}). This representation enables us to derive a nuclear norm optimization problem which can be solved very efficiently. Imposing a low rank constraint on a Gram matrix has shown to improve 3D reconstructions \cite{li12}. However, the method of \cite{li12} is only based on rotational constraints for the camera matrices. So the necessary amount of camera motion is still large. Our optimization in the KCS allows for the construction of a geometric constraint based on the topology of the underlying kinematic chain. So the required amount of camera motion is much lower.

We evaluate our method on different standard data bases (CMU MoCap \cite{cmumocap}, KTH \cite{kazemi2013multi}, HumanEva \cite{humaneva}) as well as on our own data bases qualitatively and quantitatively. The proposed algorithm achieves state-of-the-art results and can handle problems like motion transfers and unseen motion. Due to the noise robustness of our method we can apply a CNN-based joint labeling algorithm \cite{Newell2016} for RGB images as input data which allows us to directly reconstruct human poses from unlabeled videos. Although this method is developed for human motion capture it is applicable to other kinematic chains such as animals or industrial robots as shown in the experiments in Sec.~\ref{sec_eval_kin_chains}.

Summarizing, our contributions are:
\begin{itemize}
	\item We propose a method for 3D reconstruction of kinematic chains from monocular image sequences.
	\item An objective function based on structural properties of kinematic chains is derived that not only imposes a low-rank assumption on the shape basis but also has a physical interpretation.
	\item We propose using a nuclear norm optimization in a \textit{kinematic chain space}.
	\item In contrast to other works our method is not limited to previously learned motion patterns and does not use strong anthropometric constraints such a-priorly determined bone lengths.
\end{itemize}

\section{Related Work}
The idea of decomposing a set of 2D points tracked over a sequence into matrices whose entries are identified with the parameters of shape and motion was first proposed by Tomasi and Kanade \cite{tomasi92}. A generalization of this algorithm to deforming shapes was proposed by Bregler et al. \cite{Bregler2000}. They assume that the observation matrix can be factorized in two matrices representing camera motion and multiple basis shapes. After an initial decomposition is found by applying an SVD to the observation matrix they compute a transformation matrix by enforcing camera constraints. Xiao et al. \cite{Xiao2004} showed that the basis shapes of \cite{Bregler2000} are ambiguous. They solved this ambiguity by employing basis constraints on them. As shown by Akther et al. \cite{akhter2011trajectory} these basis constraints are still not sufficient to resolve the ambiguity. Therefore, they proposed to use an object independent trajectory basis. Torresani et al. \cite{Torresani03,torresani08,Torresani01} proposed to use different priors on the transformation matrix such as additional rank constraints and gaussian priors. Gotardo and Martinez \cite{gotardo11} built on the idea of \cite{akhter2011trajectory} by applying the DCT representation to enforce a smooth 3D shape trajectory. Parallel to this work they proposed a solution that uses the kernel trick to also model nonlinear deformations \cite{gotardo112} which cannot be represented by a linear combination of basis shapes. Hamsici et al. \cite{gotardo12} combine the approaches of \cite{gotardo11} and \cite{gotardo112} by also assuming a smooth shape trajectory and apply the kernel trick to learn a mapping between the 3D shape and the 2D input data. Dai et al. \cite{li12} do not assume any prior knowledge about the scene such as smoothly moving cameras or points. Instead, they minimize the trace norm of the transformation matrix to impose a sparsity constraint. Since all these methods shall work for arbitrary non-rigid 3D objects, none of them utilizes knowledge about the underlying kinematic structure. Rehan et al. \cite{Rehan2014} were the first to define a temporary rigidity of reconstructed structures by factorizing only a small number consecutive frames. Therefore, they are able to model rigid structures to a small degree but due to their sliding window assumption the method is even more restricted to scenes with sufficient camera motion. 

Several works consider the special case of 3D reconstruction of human motion from monocular images. A common approach is to previously learn base poses of the same motion category. They are then linearly combined for the estimation of 3D poses. To avoid implausible poses, most authors utilize properties of human skeletons to constrain a reprojection error based optimization problem. However, anthropometric priors like the sum of squared bone lengths \cite{Ramakrishna12}, known limb proportions \cite{Wang2014}, known skeleton parameters \cite{ChenC09} or previously trained joint angle constraints \cite{Akhter2015} all suffer from the fact that parameters have to be fixed or constrained. Zhou et al. \cite{ZhouConvexRelax2015} propose a convex relaxation of the commonly used reprojection error formulation to avoid the alternating optimization of camera and object pose. While many approaches try to reconstruct human poses from a single image using anthropometric priors, such constraints have rarely been used for 3D reconstruction from image sequences. Wandt et al. \cite{Wandt2016} used a temporal bone length constraint, which does not use known skeleton parameters. Zhou et al. \cite{Zhou2016} combined a deep neural network that estimates 2D landmarks with 3D reconstruction of the human pose. The restriction to a trained subset of possible human motions is the major downside of these subspace-based approaches.

In this paper we combine NR-SfM and human pose modeling without requiring previously learned motions. By using a representation that implicitly models the kinematic chain of a human skeleton our algorithm is capable to reconstruct unknown motion from labeled image sequences.

\section{Estimating Camera and Shape}
\label{sec_cam_shape}
The $i$-th joint of a kinematic chain is defined by a vector $\bm{x}_i \in \mathbb{R}^3$ containing the $x$,$y$,$z$-coordinates of the location of this joint. By concatenating $j$ joint vectors we build a matrix representing the pose $\bm{X}$ of the kinematic chain
\begin{equation}
\label{eqn_pose}
\bm{X}=(\bm{x}_1, \bm{x}_2, \cdots, \bm{x}_j)
.
\end{equation}
The pose $\bm{X}_k$ in frame $k$ can be projected into the image plane by
\begin{equation}
	\bm{X}'_k=\bm{P}_k \bm{X}_k,
\end{equation}
where $\bm{P}_k$ is the projection matrix corresponding to a weak perspective camera. For a sequence of $f$ frames, the pose matrices are stacked such that $\bm{W}=(\bm{X}'_1, \bm{X}'_2, \dots , \bm{X}'_f)^T$ and $\hat{\bm{X}}=(\bm{X}_{1}, \bm{X}_{2}, \dots , \bm{X}_{f})^T$. This implies
\begin{equation}
\bm{W}=\bm{P} \hat{\bm{X}},
\end{equation}
where $\bm{P}$ is a block diagonal matrix containing the camera matrices $\bm{P}_{1,\dots,f}$ for the corresponding frame. In contrast to methods based on \cite{Bregler2000} we do not center the data at the mean. Instead, after an initial camera estimation we subtract a mean pose $\bm{X}_0$ from the measurement matrix by
\begin{equation}
\hat{\bm{W}}=\bm{W} - \bm{P} \hat{\bm{X}}_0
.
\end{equation}
$\hat{\bm{X}}_0$ is a matrix of stacked mean poses similar to the construction of $\bm{W}$. It is used for initializing the algorithm and will be deformed during the optimization. The matrix $\bm{X}_0$ does not need to be the mean pose of the sequence or conform to anthropometric measures of the observed object. In our experiments we use a mean pose from various motions of the CMU data set. Different motions are reconstructed from the same mean pose. Following the approach of Bregler et al. \cite{Bregler2000} we decompose $\hat{\bm{W}}$ by Singular Value Decomposition to obtain a rank-$3K$ pose basis $\bm{Q} \in \mathbb{R}^{3K\times j}$. While \cite{Bregler2000} and similar works are optimizing a transformation matrix with respect to orthogonality constraints for camera matrices, we optimize the transformation matrix with respect to constraints based on a physical interpretation of the underlying structure. With $\bm{A}$ as transformation matrix for the pose basis we may then write
\begin{equation}
\label{eqn_projection}
\bm{W}=\bm{P} (\hat{\bm{X}}_0 + \bm{A}\bm{Q})
.
\end{equation}
In the following sections we will present how poses can be projected into the kinematic chain space (Sec.~\ref{sec_bone_space}) and how we derive an optimization problem from it (Sec.~\ref{sec_problem}). Combined with the camera estimation (Sec.~\ref{sec_camera}) an alternating algorithm is presented in Sec.~\ref{sec_algorithm}.

\subsection{Kinematic Chain Space}
\label{sec_bone_space}
To define a bone $\bm{b}_k$, a vector between the $r$-th and $t$-th joint is computed by
\begin{equation}
\label{eqn_bone_diff}
\bm{b}_k=\bm{p}_r-\bm{p}_t=\bm{X}\bm{c}
,
\end{equation}
where
\begin{equation}
\bm{c}=(0,\dots, 0, 1, 0, \dots, 0 ,-1 ,0 ,\dots,0)^T
,
\end{equation}
with $1$ at position $r$ and $-1$ at position $t$. The vector $\bm{b}_k$ has the same direction and length as the corresponding bone. Similarly to Eq.~\eqref{eqn_pose}, a matrix $\bm{B} \in \mathbb{R}^{3\times b}$ can be defined containing all $b$ bones

\begin{equation}
\label{eqn_bone}
\bm{B}=(\bm{b}_1, \bm{b}_2, \dots, \bm{b}_b)
.
\end{equation}
The matrix $\bm{B}$ is calculated by 
\begin{equation}
\label{eqn_kin_space}
\bm{B}=\bm{X}\bm{C}
,
\end{equation}
where $\bm{C}\in \mathbb{R}^{j\times b}$ is built by concatenating multiple vectors $\bm{c}$. Analogously to $\bm{C}$, a matrix $\bm{D}\in \mathbb{R}^{b\times j}$ can be defined that maps $\bm{B}$ back to $\bm{X}$:
\begin{equation}
\label{eqn_map_bone_to_point}
\bm{X}=\bm{B}\bm{D}
.
\end{equation}
$\bm{D}$ is constructed similar to $\bm{C}$. Each column adds vectors in $\bm{B}$ to reconstruct the corresponding point coordinates. Note, that $\bm{C}$ and $\bm{D}$ are a direct result of the underlying kinematic chain. Therefore, the matrices $\bm{C}$ and $\bm{D}$ perform the mapping from point representation into the \textit{kinematic chain space} and vice versa.

\subsection{Trace Norm Constraint}
\label{sec_problem}
One of the main properties of human skeletons is the fact that bone lengths do not change over time.

Let 
\begin{equation}
\label{eqn_Psi}
\bm{\Psi}=\bm{B}^T\bm{B}=
\begin{pmatrix}
l_1^2 & \cdot & \cdot & \cdot \\
\cdot & l_2^2 & \cdot & \cdot \\
\cdot & \cdot & \ddots & \cdot \\
\cdot & \cdot & \cdot & l_b^2 \\
\end{pmatrix}
.
\end{equation}
be a matrix with the squared bone lengths on its diagonal. From $\bm{B}\in\mathbb{R}^{3\times b}$ follows $rank(\bm{B})=3$. Thus $\bm{\Psi}$ has rank $3$. Note, if $\bm{\Psi}$ is computed for every frame we can define a stronger constraint on $\bm{\Psi}$. As bone lengths do not change for the same person the diagonal of $\bm{\Psi}$ remains constant.
\begin{proposition}
	\label{prop_bone_lengths}
	The nuclear norm of $\bm{B}$ is invariant for any bone configuration of the same person.
\end{proposition}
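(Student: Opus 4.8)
The plan is to exploit the fact that the nuclear norm of $\bm{B}$ equals the sum of its singular values, and that these singular values are the square roots of the eigenvalues of $\bm{\Psi}=\bm{B}^T\bm{B}$. Since $\bm{B}\in\mathbb{R}^{3\times b}$ has rank $3$, there are exactly three nonzero singular values $\sigma_1,\sigma_2,\sigma_3$, and $\|\bm{B}\|_* = \sigma_1+\sigma_2+\sigma_3 = \sqrt{\lambda_1}+\sqrt{\lambda_2}+\sqrt{\lambda_3}$ where the $\lambda_i$ are the nonzero eigenvalues of $\bm{\Psi}$. The strategy is therefore to argue that the multiset of nonzero eigenvalues of $\bm{\Psi}$ is the same for any bone configuration of the same person, which immediately forces the nuclear norm to be constant.

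First I would observe that $\bm{\Psi}=\bm{B}^T\bm{B}$ and the $3\times 3$ matrix $\bm{B}\bm{B}^T$ have identical nonzero eigenvalues. Next I would examine how $\bm{B}$ changes between two poses of the same person: any valid configuration is obtained by rotating the individual bones, but since the bones form a kinematic chain whose structure (encoded in $\bm{C}$) and whose bone lengths (the diagonal of $\bm{\Psi}$) are fixed, the key claim is that a change of pose acts on $\bm{B}$ as $\bm{B}\mapsto\bm{R}\bm{B}$ for a global rotation $\bm{R}\in SO(3)$ — or, more carefully, that the relevant $3\times 3$ Gram-type quantity is preserved. Then $\bm{B}\bm{B}^T\mapsto \bm{R}\bm{B}\bm{B}^T\bm{R}^T$, which is a similarity transformation and hence leaves the eigenvalues unchanged; consequently $\|\bm{B}\|_*$ is unchanged.

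The main obstacle is justifying precisely why two arbitrary configurations of the same person are related in a way that preserves the spectrum of $\bm{B}\bm{B}^T$ — a general pose change is not a single global rotation but a composition of relative joint rotations along the chain, so $\bm{B}$ does not literally transform as $\bm{R}\bm{B}$. I would handle this by reducing to the claim that $\bm{B}\bm{B}^T = \sum_k \bm{b}_k\bm{b}_k^T$ depends only on the pairwise quantities $\bm{b}_k^T\bm{b}_l$ (equivalently on $\bm{\Psi}$) up to an overall rotation: $\bm{B}\bm{B}^T$ and $\bm{B}^T\bm{B}$ share nonzero eigenvalues, and $\bm{B}^T\bm{B}=\bm{\Psi}$, so it suffices to show $\bm{\Psi}$ itself has a person-invariant spectrum. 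Here I would use that the off-diagonal entries $\bm{b}_k^T\bm{b}_l = l_k l_l \cos\theta_{kl}$ are governed by joint angles; the diagonal (the $l_k^2$) is fixed per person, and while the off-diagonals vary, the argument must instead rest on the structural identity $\bm{B}=\bm{X}\bm{C}$ together with the rank-$3$ constraint.

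Honestly, the cleanest route — and the one I would ultimately commit to — is to argue directly that $\|\bm{B}\|_*^2 \le 3\,\mathrm{tr}(\bm{B}^T\bm{B}) = 3\sum_k l_k^2$ with a tighter exact relation: since $\bm{B}$ has rank $3$, by the identity relating nuclear and Frobenius norms one cannot in general conclude invariance from the trace alone, so I would retreat to the similarity-transform argument, asserting that any bone configuration of a fixed skeleton is $\bm{B}' = \bm{R}\bm{B}$ for some $\bm{R}\in O(3)$ in the special but relevant setting the authors have in mind (the bones' relative geometry being what the trace norm sees through $\bm{B}^T\bm{B}=\bm{\Psi}$, whose diagonal — hence whose role in the optimization — is person-fixed). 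I expect the paper's actual proof to lean on exactly this: $\|\bm{B}\|_*$ is a function of the singular values of $\bm{B}$, these are the square roots of the eigenvalues of the person-fixed $\bm{\Psi}$ up to the pose-dependent but spectrum-preserving conjugation, and therefore the nuclear norm is a fixed number determined by the person's bone lengths, which is precisely why it can be used as a constraint without knowing those lengths a priori.
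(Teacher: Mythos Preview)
Your proposal does not match the paper's argument, and more importantly it contains a genuine gap that you yourself keep circling back to without closing.

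The paper's proof is much shorter than yours: it observes that $\operatorname{trace}(\bm{\Psi})=\sum_i l_i^2$ is constant because bone lengths are constant, then asserts in one line that ``the same argument holds for $\operatorname{trace}(\sqrt{\bm{\Psi}})$'', and concludes $\|\bm{B}\|_*=\operatorname{trace}(\sqrt{\bm{\Psi}})=\text{const}$. There is no spectral analysis, no $\bm{B}\bm{B}^T$, no similarity-transform argument.

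Your instinct that more is needed is correct, and the step you cannot justify is in fact false. You reduce the claim to the pose-invariance of the nonzero eigenvalues of $\bm{\Psi}=\bm{B}^T\bm{B}$ (equivalently of $\bm{B}\bm{B}^T$), and then try to obtain this from $\bm{B}\mapsto \bm{R}\bm{B}$ for a single $\bm{R}\in O(3)$. As you note yourself, an articulated pose change is \emph{not} a single left-rotation of $\bm{B}$; each column $\bm{b}_k$ rotates independently. And the spectrum of $\bm{\Psi}$ genuinely depends on the off-diagonal entries $\bm{b}_k^T\bm{b}_l=l_k l_l\cos\theta_{kl}$, which change with pose. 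A two-bone counterexample makes this concrete: with $l_1=l_2=1$, the straight configuration $\bm{B}=\left(\begin{smallmatrix}1&1\\0&0\\0&0\end{smallmatrix}\right)$ has singular values $(\sqrt{2},0,0)$ and $\|\bm{B}\|_*=\sqrt{2}$, while the right-angle configuration $\bm{B}=\left(\begin{smallmatrix}1&0\\0&1\\0&0\end{smallmatrix}\right)$ has singular values $(1,1,0)$ and $\|\bm{B}\|_*=2$. In both cases $\operatorname{trace}(\bm{\Psi})=2$, but $\operatorname{trace}(\sqrt{\bm{\Psi}})$ differs.

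So your difficulty is not a failure of technique; the obstacle you identified is real. What \emph{is} pose-invariant is the Frobenius norm $\|\bm{B}\|_F^2=\operatorname{trace}(\bm{\Psi})=\sum_i l_i^2$, and the paper's one-line extension from $\operatorname{trace}(\bm{\Psi})$ to $\operatorname{trace}(\sqrt{\bm{\Psi}})$ silently conflates the diagonal entries of $\bm{\Psi}$ (which are the $l_i^2$ and are fixed) with its eigenvalues (which are not). Your more careful route exposes exactly where that conflation occurs.
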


\begin{proof}
	The trace of $\bm{\Psi}$ equals the sum of squared bone lengths (Eq.~\eqref{eqn_Psi})
	\begin{equation}
	trace(\bm{\Psi})=\sum_{i=1}^b l_i^2
	.
	\end{equation}
	From the assumption that bone lengths of humans are invariant during a captured image sequence the trace of $\bm{\Psi}$ is constant. The same argument holds for $trace(\sqrt{\bm{\Psi}})$. Therefore, we have
	\begin{equation}
	\|\bm{B}\|_*=trace(\sqrt{\bm{\Psi}})=const
	.
	\end{equation}
\end{proof}
Since this constancy constraint is non-convex we will relax it to derive an easy to solve optimization problem. Using Eq.~\eqref{eqn_kin_space} we project Eq.~\eqref{eqn_projection} into the KCS which gives
\begin{equation}
	\bm{W} \bm{C}=\bm{P} (\hat{\bm{X}}_0 \bm{C} + \bm{A}\bm{Q} \bm{C})
\end{equation}
The remaining unknown is the transformation matrix $\bm{A}$. For better readability we define $\bm{B}_0=\bm{X}_0 \bm{C}$ and $\bm{S}=\bm{Q} \bm{C}$.

\begin{proposition}
	\label{prop_nn}
	The nuclear norm of the transformation matrix $\bm{A}$ for each frame has to be greater than a value $c$, which is constant for each frame.
\end{proposition}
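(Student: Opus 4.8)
The plan is to exploit the factorization $\bm{B} = \bm{B}_0 + \bm{A}\bm{S}$ obtained by projecting Eq.~\eqref{eqn_projection} into the KCS, together with Proposition~\ref{prop_bone_lengths}, which tells us that $\|\bm{B}\|_*$ is a constant $c$ (the same value for every frame, since it depends only on the bone lengths of the observed person). So the quantity we want to bound is $\|\bm{A}\bm{S}\|_*$ where $\bm{A}\bm{S} = \bm{B} - \bm{B}_0$. First I would write $\bm{A}\bm{S} = \bm{B} - \bm{B}_0$ and apply the triangle inequality for the nuclear norm in the reverse direction, i.e. $\|\bm{B} - \bm{B}_0\|_* \ge \big|\,\|\bm{B}\|_* - \|\bm{B}_0\|_*\,\big|$. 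Since $\|\bm{B}\|_* = c$ is fixed by Proposition~\ref{prop_bone_lengths} and $\bm{B}_0 = \bm{X}_0\bm{C}$ is built from the fixed mean pose and hence has a fixed nuclear norm $\|\bm{B}_0\|_*$, the right-hand side is a constant independent of the frame.

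The next step is to pass from a bound on $\|\bm{A}\bm{S}\|_*$ to a bound on $\|\bm{A}\|_*$ itself. Here I would use submultiplicativity of the nuclear norm under multiplication by a bounded operator: $\|\bm{A}\bm{S}\|_* \le \|\bm{A}\|_*\,\|\bm{S}\|_2$, where $\|\bm{S}\|_2$ is the spectral norm of $\bm{S} = \bm{Q}\bm{C}$, which is a fixed quantity determined by the pose basis $\bm{Q}$ and the kinematic-chain matrix $\bm{C}$, both of which are frame-independent. Combining the two inequalities gives
\begin{equation}
\|\bm{A}\|_* \;\ge\; \frac{\|\bm{A}\bm{S}\|_*}{\|\bm{S}\|_2} \;\ge\; \frac{\big|\,c - \|\bm{B}_0\|_*\,\big|}{\|\bm{S}\|_2} \;=:\; c',
\end{equation}
and $c'$ is manifestly the same for every frame, which is exactly the claim (after renaming $c' \mapsto c$ as in the statement).

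The main obstacle — really the only subtle point — is making sure the chain of norm inequalities is set up in the correct direction: the reverse triangle inequality produces a lower bound, and submultiplicativity must then be arranged so that dividing by $\|\bm{S}\|_2$ preserves that lower bound on $\|\bm{A}\|_*$. One also has to be slightly careful that $\bm{S}$ has full column rank (or at least that the relevant block is invertible) so that $\|\bm{A}\bm{S}\|_*$ genuinely controls $\|\bm{A}\|_*$ from below and $\|\bm{S}\|_2 \neq 0$; this follows from the construction of $\bm{Q}$ as a rank-$3K$ basis and of $\bm{C}$ from the kinematic topology. A secondary cosmetic issue is that the constant $c$ in the statement is being reused with a different meaning than in Proposition~\ref{prop_bone_lengths}; in the write-up I would keep the two constants notationally distinct to avoid confusion. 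Everything else is a routine application of standard matrix-norm inequalities.
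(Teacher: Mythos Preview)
Your argument is correct and follows the same skeleton as the paper's proof: write $\bm{B}=\bm{B}_0+\bm{A}\bm{S}$, use Proposition~\ref{prop_bone_lengths} to make $\|\bm{B}\|_*$ a frame-independent constant, apply a triangle-type inequality to lower-bound $\|\bm{A}\bm{S}\|_*$, and then translate this into a lower bound on $\|\bm{A}\|_*$.

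The one substantive difference is in the last step. The paper asserts that $\bm{S}$ is \emph{unitary} and concludes $\|\bm{A}\bm{S}\|_*=\|\bm{A}\|_*$ directly. You instead invoke the submultiplicative bound $\|\bm{A}\bm{S}\|_*\le\|\bm{A}\|_*\,\|\bm{S}\|_2$ and divide through by the fixed spectral norm $\|\bm{S}\|_2$. Your route is the more defensible one: since $\bm{S}=\bm{Q}\bm{C}$ with $\bm{C}$ the incidence-type matrix of the kinematic chain, there is no reason for $\bm{S}$ to be unitary even when $\bm{Q}$ has orthonormal rows, so the paper's equality step is at best an idealization. The price you pay is a weaker constant (scaled by $1/\|\bm{S}\|_2$), but the statement only asks for \emph{some} frame-independent lower bound, so this costs nothing. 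Your side remark about needing $\bm{S}$ to have full column rank is unnecessary for the inequality you actually use---you only need $\|\bm{S}\|_2\neq 0$---but it does no harm. Likewise, your use of the reverse triangle inequality (with the absolute value) is a harmless strengthening of the paper's forward triangle inequality, which could in principle yield a negative and hence vacuous bound.
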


\begin{proof}
	Let $\bm{B}=\bm{B}_1+\bm{B}_0$ be a decomposition of $\bm{B}$ into the initial bone configuration $\bm{B}_0$ and a difference to the observed pose $\bm{B}_1$. It follows that 
	\begin{equation}
	\|\bm{B}\|_*=\|\bm{B}_1+\bm{B}_0\|_*=c_1
	,
	\end{equation}
	where $c_1$ is a constant.
	The triangle inequality for matrix norms gives
	\begin{equation}
	\label{eqn_B_min}
	\|\bm{B}_1\|_* +\|\bm{B}_0\|_* \geq \|\bm{B}_1+\bm{B}_0\|_*=c_1
	.
	\end{equation}
	Since $\bm{B}_0$ is known, it follows
	\begin{equation}
	\label{eqn_b_limit}
	\| \bm{B}_1 \|_* \geq c_1 - \| \bm{B}_0\|_* = c
	,
	\end{equation}
	where $c$ is constant. $\bm{B}_1$ can be represented in the shape basis $\bm{S}$ (cf. Sec.~\ref{sec_cam_shape}) by multiplying it with the transformation matrix $\bm{A}$
	\begin{equation}
	\bm{B}_1 = \bm{AS}
	.
	\end{equation}
	Since the shape base matrix $\bm{S}$ is a unitary matrix the nuclear norm of $\bm{B}_1$ equals
	\begin{equation}
	\|\bm{B}_1\|_* = \|\bm{A}\|_*
	.
	\end{equation}
	By Eq.~\eqref{eqn_b_limit} follows that
	\begin{equation}
	\label{eqn_A_tresh}
	\| \bm{A} \|_* \geq c
	.
	\end{equation}
\end{proof}
Proposition~\ref{prop_nn} also holds for a sequence of frames. Let $\bm{\hat{A}}$ be a matrix built by stacking $\bm{A}$ for each frame and $\bm{\hat{B}}_0$ be defined similarly, we relax Eq.~\eqref{eqn_A_tresh} and obtain the final formulation for our optimization problem
\begin{equation}
\label{eqn_B_opti}
\begin{aligned}
& \underset{\bm{\hat{A}}}{\min}
& & \|\bm{\hat{A}}\|_* \\
& \text{s.t.}
& & \| \bm{W} \bm{C}-\bm{P}(\bm{\hat{A}}\bm{S}+\bm{\hat{B}}_0) \|_F=0.
\end{aligned}
\end{equation}
Eq.~\eqref{eqn_B_opti} does not only define a low rank assumption on the transformation matrix. Due to our derivation and bone representation we showed that the nuclear norm is reasonable because it has a concise physical interpretation. More intuitively the minimization of the nuclear norm will give solutions close to a mean bone configuration $\bm{B}_0$ in terms of rotation of the bones, yet does not fix the solution to a predefined bone length which allows us to reconstruct arbitrary skeletons.

Moreover, Eq.~\eqref{eqn_B_opti} is a well studied problem which can be efficiently solved by common optimization methods such as Singular Value Thresholding (SVT) \cite{CandesSVT2009}.

\subsection{Camera}
\label{sec_camera}
The objective function in Eq.~\eqref{eqn_B_opti} can also be optimized for the camera matrix $\bm{P}$. Since $\bm{P}$ is a block diagonal matrix, Eq.~\eqref{eqn_B_opti} can be solved block-wise for each frame. With $\bm{X}'_i$ and $\bm{P}_i$ corresponding to the observation and camera at frame $i$ the optimization problem can be written as
\begin{equation}
\label{eqn_opt_camera}
\min_{\bm{P}_i} \| \bm{X}'_i \bm{C} - \bm{P}_i(\bm{AS}+\bm{B}_0) \|_F
.
\end{equation}

Considering the entries in

\begin{equation}
\bm{P}_i = 
\left(\begin{array}{cccc}
p_{11} & p_{12} & p_{13}\\
p_{21} & p_{22} & p_{23}\\
\end{array} \right)
\end{equation}
we can enforce a weak perspective camera by the constraints
\begin{equation}
p_{11}^2+p_{12}^2+p_{13}^2 -(p_{21}^2+p_{22}^2+p_{23}^2) = 0 
\end{equation}
and
\begin{equation}
p_{11} p_{21} + p_{12} p_{22} + p_{13} p_{23} = 0
.
\end{equation}

\subsection{Algorithm}
\label{sec_algorithm}
In the previous sections we derived an optimization problem that can be solved for the camera matrix $\bm{P}$ and transformation matrix $\bm{A}$ respectively. As both are unknown we propose algorithm~\ref{alg_alternating} which alternatingly solves for both matrices. Initialization is done by setting all entries in the transformation matrix $\bm{A}$ to zero. Additionally, an initial bone configuration $\bm{B}_0$ is required. It has to roughly model a human skeleton but does not need to be the exact mean of the sequence. 

\begin{algorithm}
	\caption{Factorization algorithm for kinematic chains}
	\label{alg_alternating}
	\begin{algorithmic}
		\STATE \% \textbf{Input:}
		\STATE $\bm{B}_0 \gets$ initial bone configuration
		\STATE $\bm{C} \gets$ kinematic chain matrix
		\STATE $\bm{W} \gets$ observation
		\STATE $f \gets$ number of frames
		\STATE $\bm{A} \gets \bm{0}$ 
		\STATE 
		
		\WHILE{no convergence}
		\FOR{$t=1 \to f$}
		\STATE optimize $\| \bm{X}_t \bm{C} - \bm{P}_t(\bm{AS}+\bm{B}_0) \|_F$
		\STATE insert $\bm{P}_t$ in $\bm{P}$
		\ENDFOR
		\STATE perform SVT on 
		\STATE \hspace{\algorithmicindent} $\min\|\bm{\hat{A}}\|_*$ s.t. $\| \bm{W}\bm{C}-\bm{P}(\bm{\hat{A}}\bm{S}+\bm{\hat{B}}_0) \|_F=0$
		
		\ENDWHILE
		\STATE 
		\STATE \% \textbf{Output:}
		\STATE $\bm{P}$: camera matrices 
		\STATE $(\bm{\hat{A}}\bm{S}+\bm{\hat{B}}_0)\bm{D}$: 3D poses
	\end{algorithmic}
\end{algorithm}

\section{Experiments}
For the evaluation of our algorithm different benchmark data sets (CMU MoCap \cite{cmumocap}, HumanEva \cite{humaneva}, KTH \cite{kazemi2013multi}) were used. As measure for the quality of the 3D reconstructions we calculate the \textit{Mean Per Joint Position Error (MPJPE)} \cite{Ionescu14} which is defined by

\begin{equation}
e = \frac{1}{j} \sum_{i=1}^{j} \| \bm{x}_i - \hat{\bm{x}}_i \|
,
\end{equation}
where $\bm{x}_i$ and $\hat{\bm{x}}_i$ correspond to the ground truth and estimated positions of the $i$-th joint respectively. By rigidly aligning the 3D reconstruction to the ground truth we obtain the \textit{3D positioning error (3DPE)} as introduced by \cite{SimoSerraRATM12}. To compare sequences of different lengths the mean of the 3DPE over all frames is used. In the following it is referred to as \textit{3D error}.

Additional to this quantitative evaluation we perform reconstructions of different kinematic chains in Sec.~\ref{sec_eval_kin_chains} and on unlabeled image sequences in Sec.~\ref{sec_unlabeled_sequences}. All animated meshes in this section are created using SMPL \cite{SMPL2015}.

\subsection{Evaluation on Benchmark Databases}
\begin{figure*}[t]
	\begin{center}
		\includegraphics[width=0.85\linewidth]{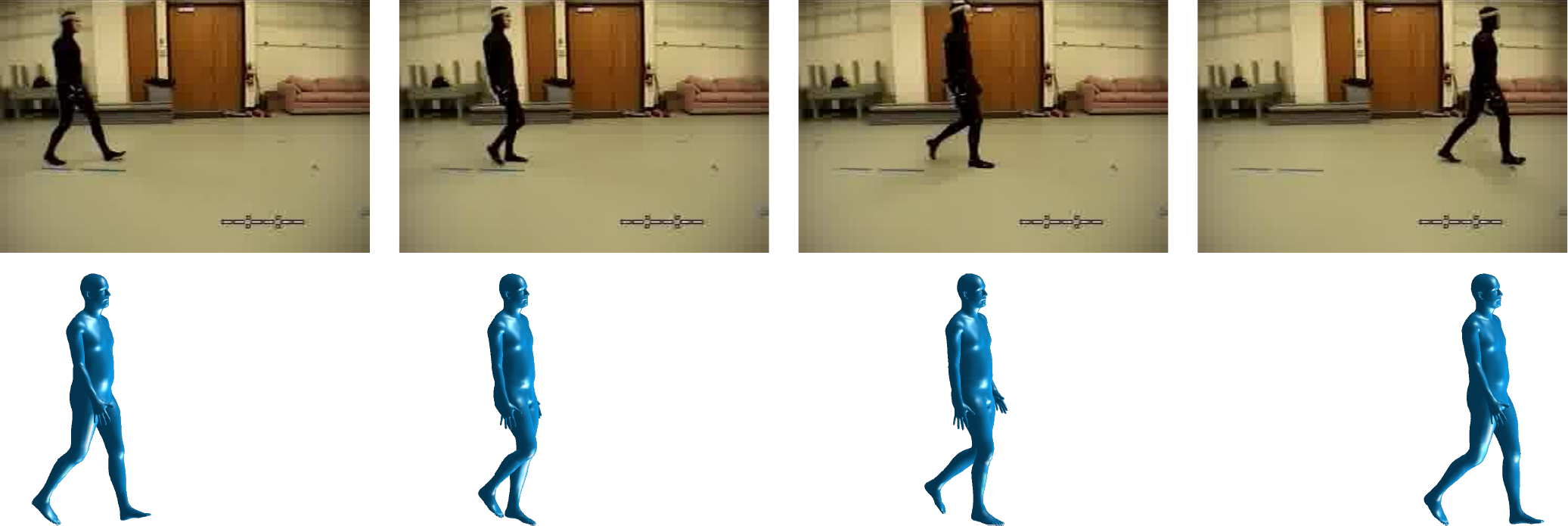}
	\end{center}
	\caption{Reconstruction of a walking motion from the CMU data base subject 35/02.}
	\label{fig_walk}
\end{figure*}

\begin{figure*}[!ht]
	\begin{center}
		\includegraphics[width=0.85\linewidth]{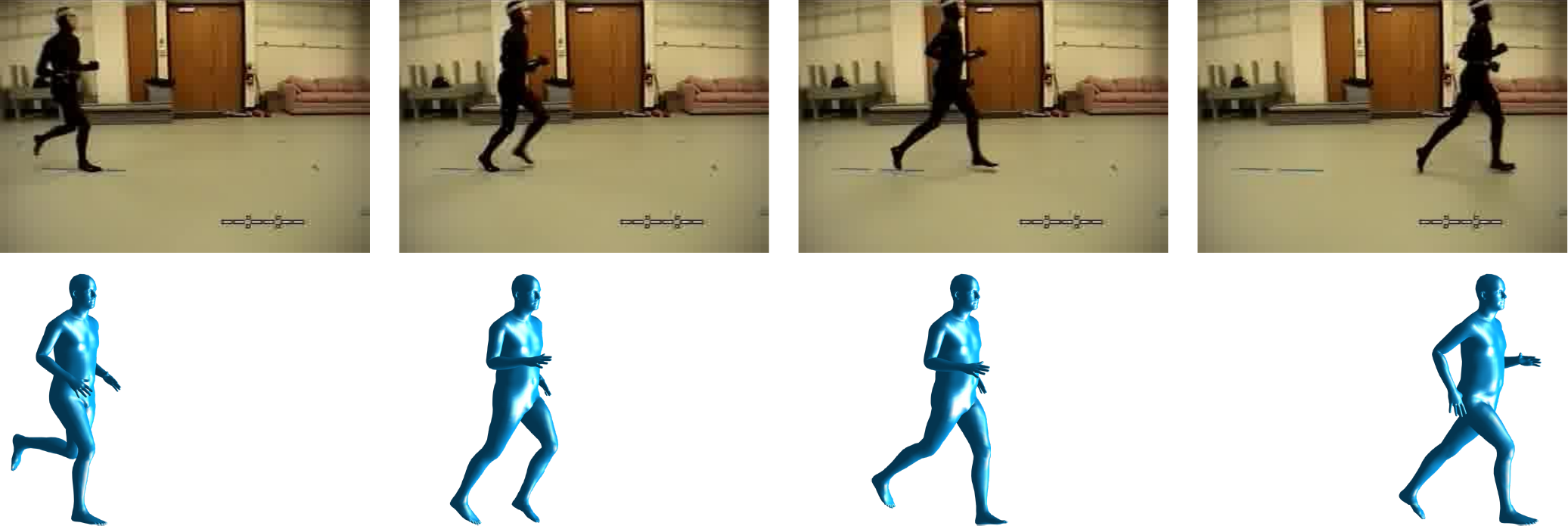}
	\end{center}
	\caption{Reconstruction of a running motion from the CMU data base subject 35/17.}
	\label{fig_run}
\end{figure*}

\begin{figure}[t]
	\begin{center}
		\includegraphics[width=0.9\linewidth]{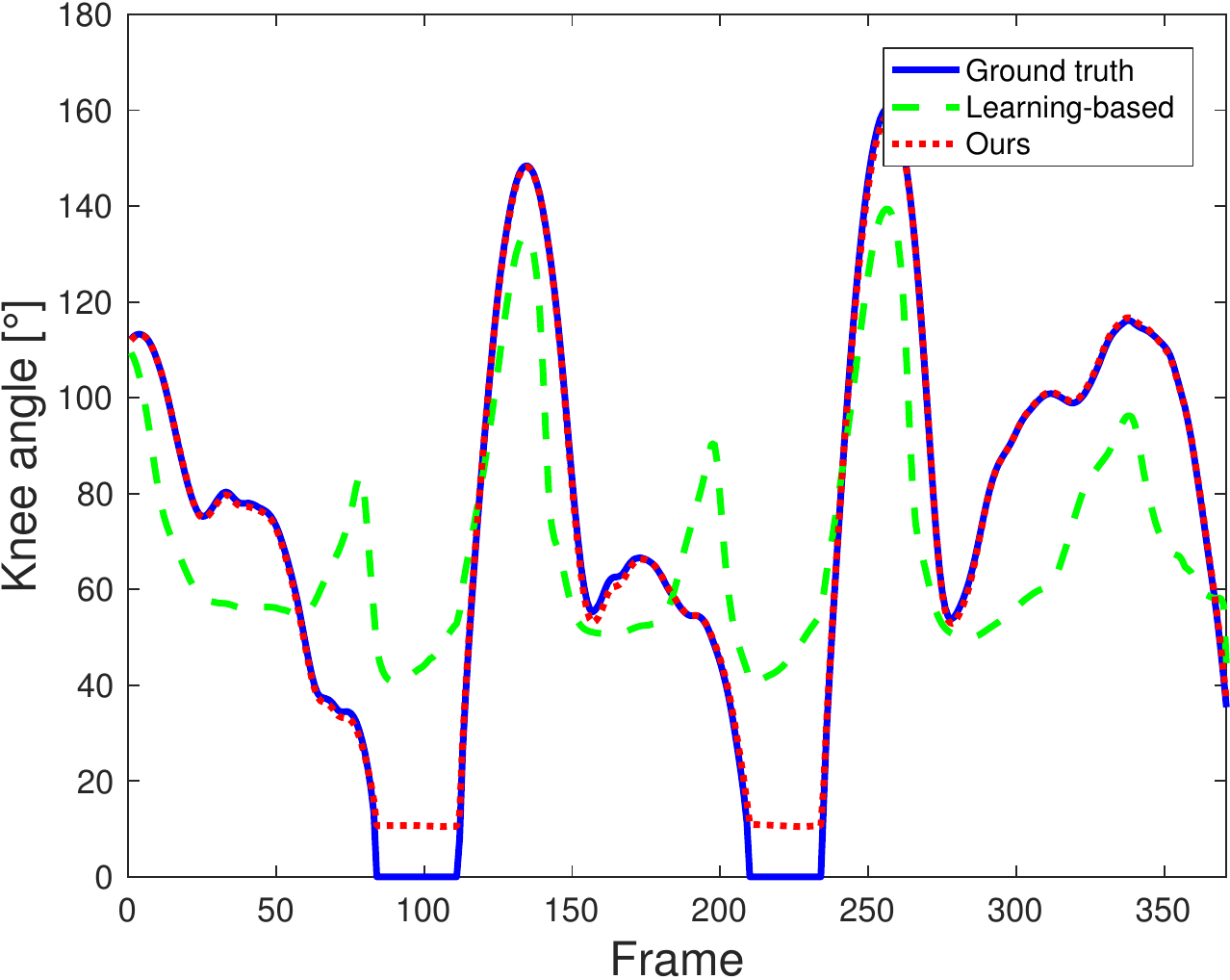}
	\end{center}
	\caption{Knee angle of reconstructions of a limping motion. The learning-based method \cite{Wandt2016} struggles to reconstruct minor differences from the motion patterns used for training whereas our learning-free approach recovers the knee angle in more detail.}
	\label{fig_baseposes}
\end{figure} 

To qualitatively show the drawbacks of learning-based approaches we reconstructed a sequence of a limping person. We use the method of \cite{Wandt2016} trained on walking patterns to reconstruct the 3D scene. Although the motions are very similar, the algorithm of \cite{Wandt2016} is not able to reconstruct the subtle motions of the limping leg. Fig.~\ref{fig_baseposes} shows the knee angle of the respective leg. The learning-based method reconstructs a periodic walking motion and cannot recover the unknown asymmetric motion. The proposed algorithm is able to recover the motion in more detail.

Since we do not require training data, we compare with other unsupervised works. For each sequence we created 20 random camera paths with low camera motion and compared our 3D reconstruction results with other state-of-the-art methods \cite{akhter2011trajectory,gotardo12}. Table~\ref{tab_eval_results} shows the 3D error in $mm$ for different sequences and data sets. For the entry \textit{walk35} we calculated the mean overall 3D errors of all 23 walking sequences from subject 35 in the CMU data base. The column \textit{jump} shows the 3D error of a single jumping sequence. \textit{KTH} means the football sequence of the KTH data set \cite{kazemi2013multi}. All these sequences are captured with limited camera motion. 
Methods like \cite{akhter2011trajectory} and \cite{gotardo12} require more camera motion and completely fail in these scenarios. Some of our reconstructions are shown in Figs.~\ref{fig_walk} and \ref{fig_run} for a walking and running motion.

\begin{table}
	\centering
\caption{3D error in $mm$ for different sequences and data sets. The column \textit{walk35} shows the mean 3D error of all sequences containing walking motion from subject 35 in the CMU data base. \textit{jump} refers to the jumping motion of subject 13/11 of the CMU data base. \textit{KTH} means the football sequence of the KTH data set \cite{kazemi2013multi}. The column \textit{HE} shows the 3D error for the HumanEva walking sequence \cite{humaneva}.}
\label{tab_eval_results}
\begin{tabular}{| l | c | c | c | c |}
\hline
 & walk35 & jump & KTH & HE \\ \hline\hline
\cite{akhter2011trajectory} & 228.68 & 210.14 & 108.91 & 106.92 \\
\cite{gotardo12} & 264.75 & 186.70 & 114.03 & 102.99 \\
Ours & \textbf{18.94} & \textbf{36.50} & \textbf{53.10} & \textbf{44.36} \\
\hline
\end{tabular}
\end{table}

\subsection{Convergence}
\begin{figure}[t]
	\begin{center}
		\includegraphics[width=0.85\linewidth]{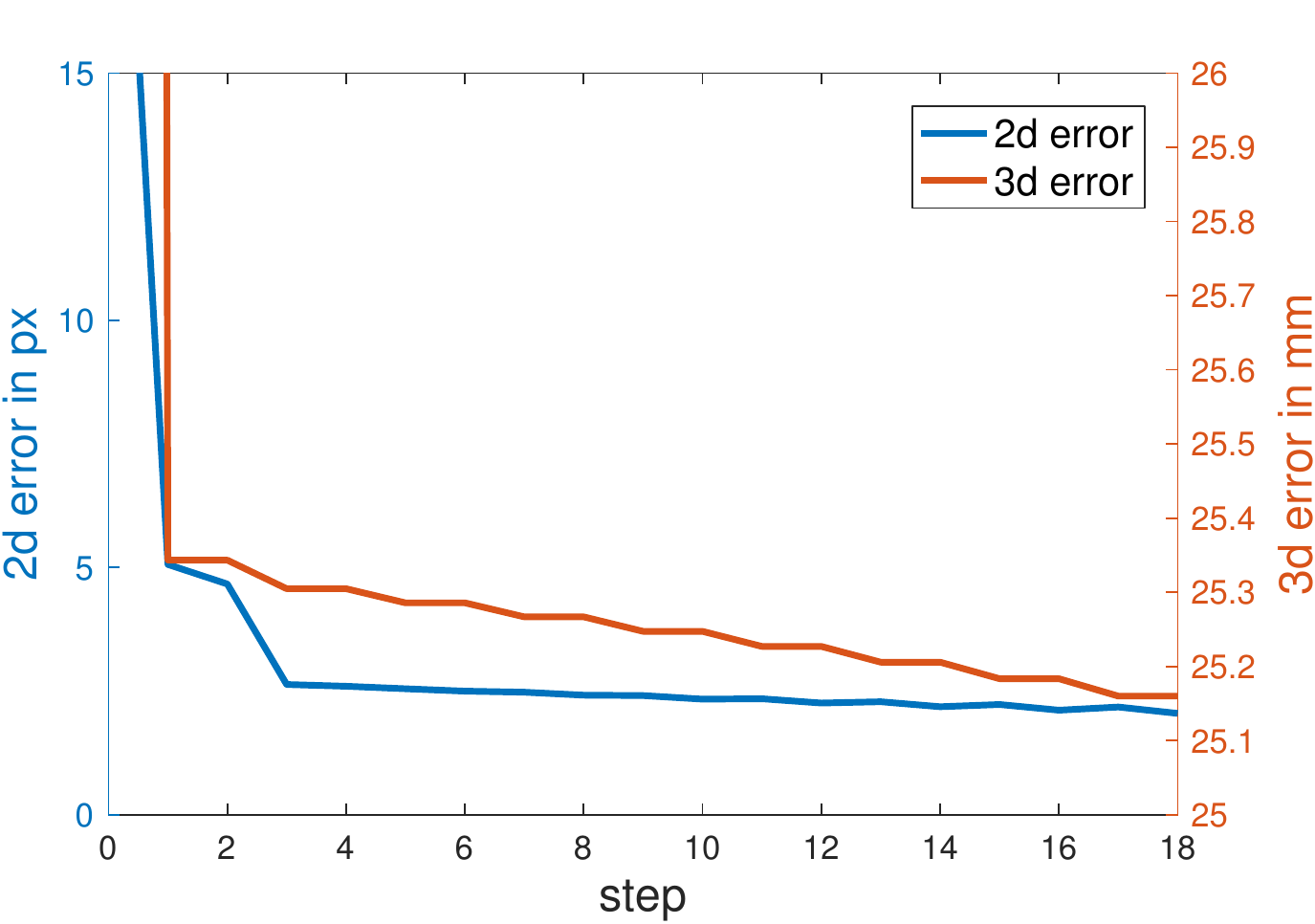}
	\end{center}
	\caption{Reprojection error and 3D error with respect to number of iterations for subject35/sequence1 from the CMU MoCap data set. Even steps refer to camera estimation while odd steps correspond to shape estimation.}
	\label{fig_convergence}
\end{figure}
\begin{figure}[t]
	\begin{center}
		\includegraphics[width=0.85\linewidth]{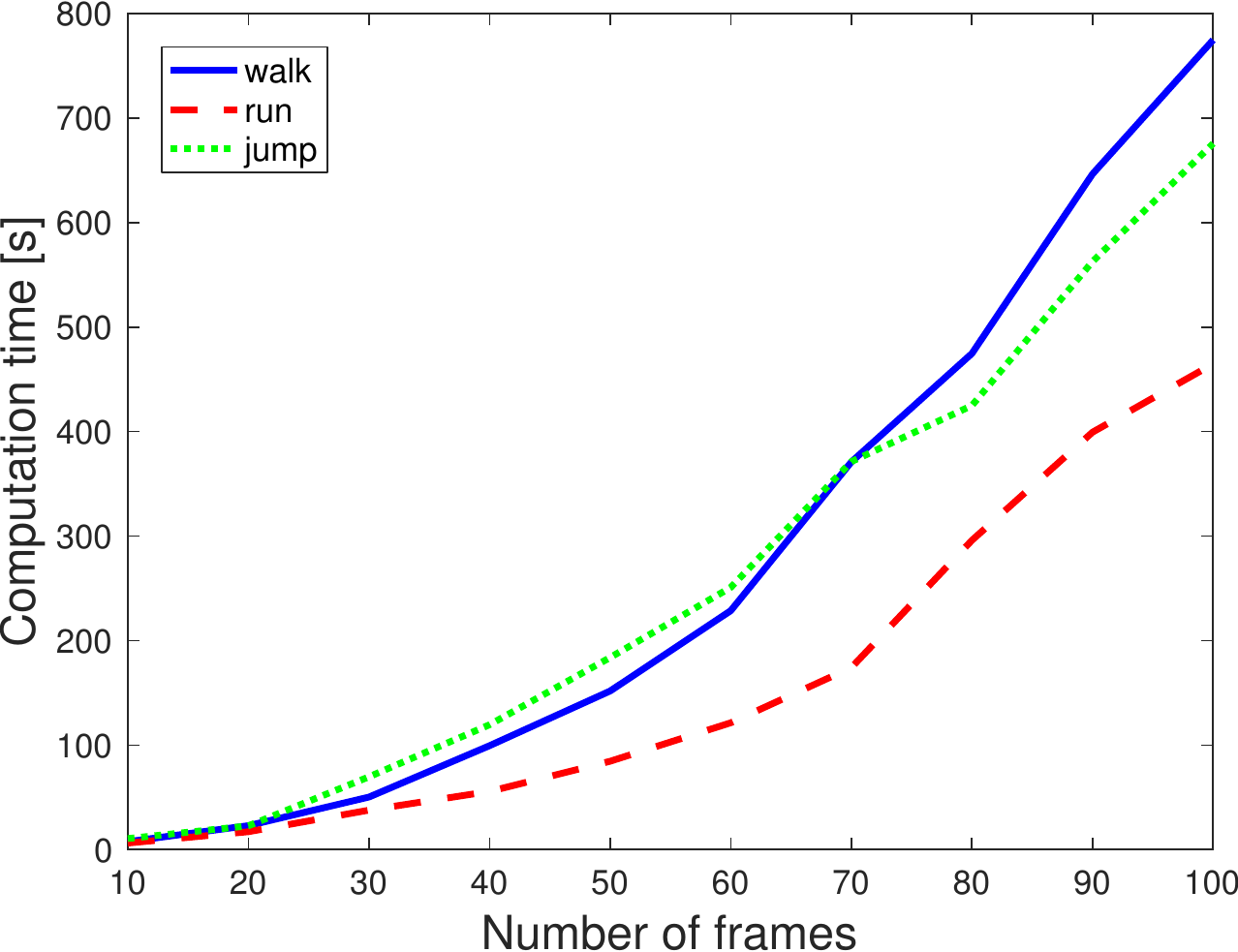}
	\end{center}
	\caption{Computation time for walking, running and jumping sequences of the CMU data set using unoptimized Matlab code. It mostly depends on the number frames and less on the observed motion.}
	\label{fig_comp_time}
\end{figure}
\begin{figure*}[t]
	\begin{center}
		\includegraphics[width=0.85\linewidth]{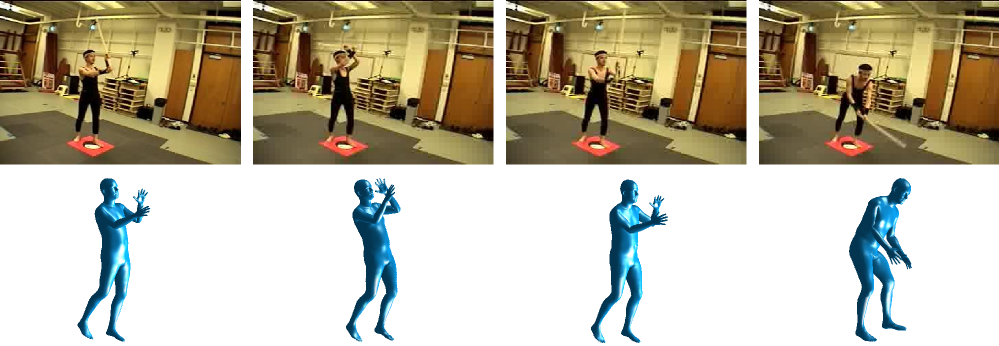}
	\end{center}
	\caption{Reconstruction of the sword play sequence of the CMU data base. The kinematic chain is extended such that the hands are rigidly connected.}
	\label{fig_sword}
\end{figure*}
We alternatingly optimize the camera matrices (Eq.~\eqref{eqn_B_opti}) and transformation matrix (Eq.~\eqref{eqn_opt_camera}). Since convergence of the algorithm cannot be guaranteed we show it by experiment. Fig.~\ref{fig_convergence} shows the convergence of the reprojection error in pixel for a sequence from the CMU MoCap data base. However, the reprojection error only shows the convergence of the proposed algorithm but cannot prove that the 3D reconstructions will improve every iteration. We additionally estimated the convergence of the 3D error in Fig.~\ref{fig_convergence}. In most cases our algorithm converges to a good minimum in less than $3$ iterations. Further iterations do not improve the visual quality and only deform the 3D reconstruction less than $1mm$. The 3D error remains constant during camera estimation which causes the \textit{steps} in the error plot.

Fig.~\ref{fig_comp_time} shows the computation time over the number of frames for three different sequences. The computation time mostly depends on the number frames and less on the observed motion. We use unoptimized Matlab code on a desktop PC for all computations. 

\subsection{Other Kinematic chains}
\label{sec_eval_kin_chains}

\begin{figure*}[t]
	\begin{center}
		\includegraphics[width=0.8\linewidth]{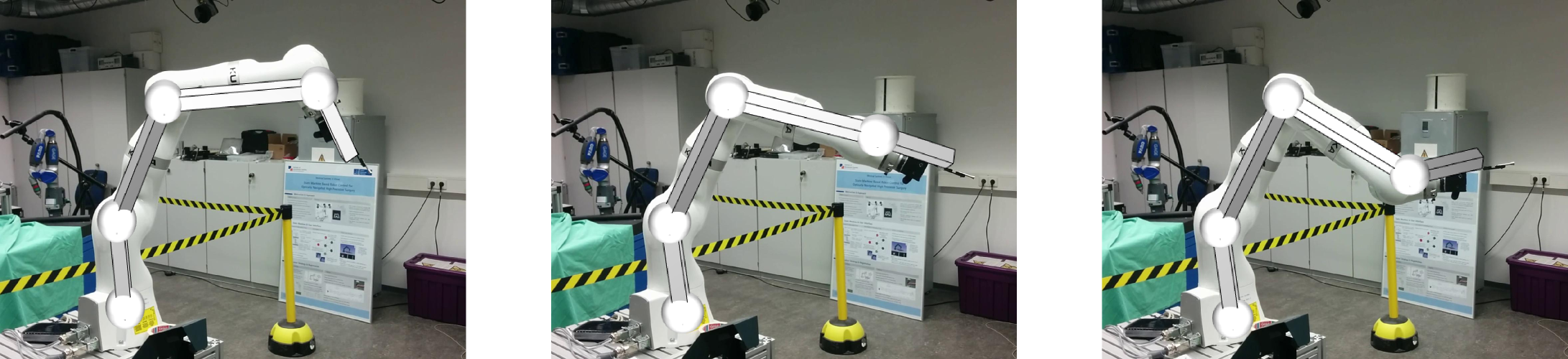}
	\end{center}
	\caption{Reconstruction of a sequence of an industrial robot moving along a path. The reconstruction is shown as an augmented overlay over the images.}
	\label{fig_robot}
\end{figure*}
\begin{figure*}[t]
	\begin{center}
		\includegraphics[width=0.8\linewidth]{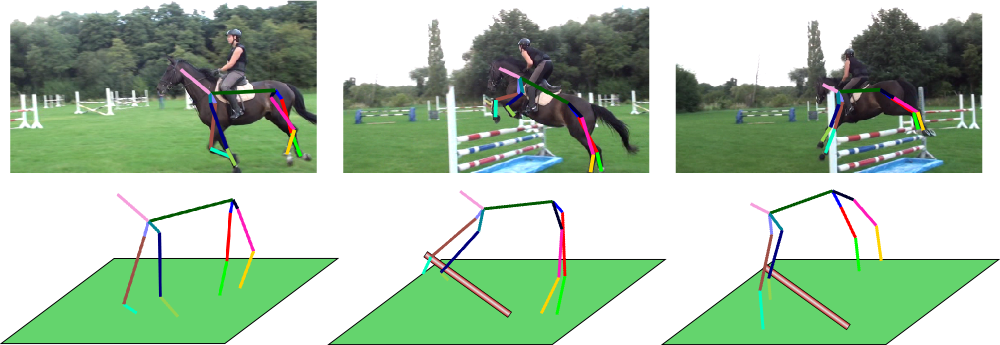}
	\end{center}
	\caption{Reconstruction of a horse riding sequence. Although we use a very rough model for the skeleton of the horse we obtain plausible reconstructions.}
	\label{fig_horse}
\end{figure*}

Although our method was developed for the reconstruction of human motion it generalizes to all kinematic chains that do not include translational joints. In this section we show reconstructions of other kinematic chains such as people holding objects, animals and industrial robots.

In situations where people hold objects with both hands the kinematic chain of the body can be extended by another rigid connection between the two hands. Fig.~\ref{fig_sword} shows the reconstruction of the sword fighting sequence of the CMU data set. By simply adding another column to the kinematic chain space matrix $\bm{C}$ (cf. Sec.~\ref{sec_bone_space}) the distance between the two hands is enforced to remain constant. The distance does not need to be known.

Fig.~\ref{fig_robot} shows a robot used for precision milling and the reconstructed 3D model as overlay. The proposed method is able to correctly reconstruct the robots motion. In Fig.~\ref{fig_horse} we reconstructed a more complex motion of a horse during show jumping. We used a simplified model of the bone structure of a horse. Also in reality the shoulder joint is not completely rigid. Despite these limitations the algorithm achieves plausible results.

\subsection{Image Sequences}
\label{sec_unlabeled_sequences}
\begin{figure*}[t]
	\begin{center}
		\includegraphics[width=0.8\linewidth]{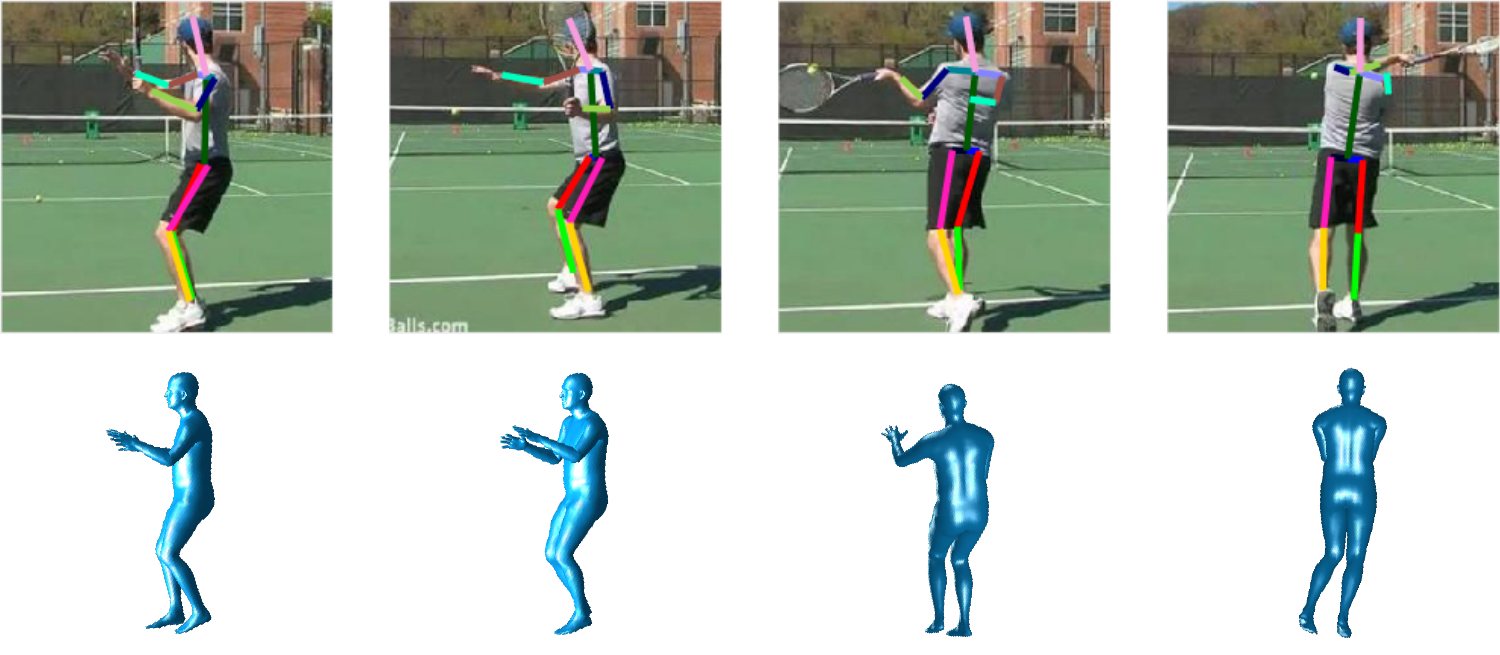}
	\end{center}
	\caption{Reconstruction of a tennis sequence automatically labeled by a CNN \cite{Newell2016}.}
	\label{fig_cnn}
\end{figure*}
The proposed method is designed to reconstruct a 3D object from labeled feature points. In the former sections this was mainly done by setting and tracking them semi-interactive. In this section we will show that our method is also able to use the noisy output of a human joint detector. We use the pre-trained CNN of Newell et al. \cite{Newell2016} to estimate the joints in a tennis sequence. Fig.~\ref{fig_cnn} shows the joints estimated by the algorithm of \cite{Newell2016} and our 3D reconstruction.

\section{Conclusion}
We developed a method for the 3D reconstruction of kinematic chains from monocular image sequences. By projecting into the kinematic chain space a constraint is derived  that is based on the assumption that bone lengths are constant. This results in the formulation of an easy to solve nuclear norm optimization problem. It allows for reconstruction of scenes with little camera motion where other non-rigid structure from motion methods fail. Our method does not rely on previous training or predefined body measures such as known limb lengths. The proposed algorithm generalizes to the reconstruction of other kinematic chains and achieves state-of-the-art results on benchmark data sets.

{\small
\bibliographystyle{ieee}
\bibliography{literature}
}

\end{document}